\newtheorem{theorem}{Theorem}
\newtheorem{remark}{Remark}
\newcommand{\vect}[1]{\bm{#1}}
\newcommand{\bvect}[1]{\bar{\vect{#1}}}
\newcommand{\hvect}[1]{\hat{\vect{#1}}}
\newcommand{\dvect}[1]{\dot{\vect{#1}}}
\newcommand{\ddvect}[1]{\ddot{\vect{#1}}}
\def\particulartemplate#1{
  \begin{tikzpicture}[overlay, remember picture]
    \draw let \p1 = (current page.west), \p2 = (current page.east) in
      node[minimum width=\x2-\x1, minimum height=0.1cm, rectangle, fill=yellow!35!white, anchor=north west, align=center, text width=\x2-\x1] at ($(current page.north west) + (0,-0.3)$) {\large \textbf{\texttt{#1}} };
  \end{tikzpicture}
}
\title{\LARGE \bf
Human-robot collaborative object transfer using human motion prediction based on Cartesian pose Dynamic Movement Primitives
}
\author{Antonis Sidiropoulos$^{1,a}$, Yiannis Karayiannidis$^{2,b}$ and Zoe Doulgeri$^{1,a }$
\thanks{$^{1}$Aristotle University of Thessaloniki, Department of
Electrical and Computer Engineering, Thessaloniki 54124, Greece. 
        {\tt\small antosidi@ece.auth.gr, doulgeri@eng.auth.gr}}%
\thanks{$^{2}$Chalmers University of Technology, Department of Electrical Engineering, G\"oteborg, Sweeden.
        {\tt\small yiannis@chalmers.se}}%
\thanks{$^{a}$ The research leading to these results has received funding by the EU Horizon 2020 Research and Innovation Programme under grant agreement No 820767, project CoLLaboratE.}%
\thanks{$^{b}$ This research is co-financed by the Swedish Research Council (VR).}%
}
\begin{document}

\maketitle
\particulartemplate{
This paper is a Post-Print version (i.e. final draft post-refereeing). \\
The publisher's version can be accessed through \\
DOI: 10.1109/ICRA48506.2021.9562035
}
\thispagestyle{empty}
\pagestyle{empty}


\begin{abstract}
In this work, the problem of human-robot collaborative object transfer to unknown target poses is addressed. The desired pattern of the end-effector pose trajectory to a known target pose is encoded using DMPs (Dynamic Movement Primitives). During transportation of the object to new unknown targets, a DMP-based reference model and an EKF (Extended Kalman Filter) for estimating the target pose and time duration of the human's intended motion is proposed. A stability analysis of the overall scheme is provided. Experiments using a Kuka LWR4+ robot equipped with an ATI sensor at its end-effector validate its efficacy with respect to the required human effort and compare it with an admittance control scheme.
\end{abstract}


\section{Introduction} \label{sec:Introduction}

The technological advancements during the past few decades have given a lot of momentum to various research fields in robotics. Undoubtedly, physical human-robot interaction constitutes a prolific research area, with numerous applications, both in industrial and household environments \cite{HR_collab_survey}. The prospect of having robots work collaboratively with humans has attracted a lot of attention and can greatly enhance everyday life. For instance, tasks like lifting or transferring objects occur quite frequently. Robots could actively assist humans execute such tasks, mitigating their fatigue. To this end, a consensus of a common trajectory and synchronization is required to execute the task in synergy. Discord between the partners' intention  will result in high interaction forces \cite{Human_motion_mov_obj_case_study}. Hence, there is a need for intelligent proactive robot control strategies with human intention prediction in order to minimize human effort, promoting productivity and efficiency.  

For the improvement of human robot collaboration, a lot of emphasis is placed upon the prediction of the human's motion/intention.
Human motion prediction algorithms are typically based on  a 
 minimum jerk motion model 
\cite{HR_coop_manip_motion_est_2001,hi_robot_assist_2007,coord_arm_move_1985}.
An impedance model whose reference trajectory is produced by a radial basis function (RBF) network adapted online by the human exerted force, was suggested in \cite{Ge:HR_Collab_Motion_Intention_Est}. Compared to admittance, the forces required by the human were reduced, however boundedness of the reference trajectory is not proved and their results show that the estimated trajectory lagged and did not converge to the actual one. 
In \cite{Gams_Ude_2014}, Iterative Learning Control is employed to adjust the DMP's trajectory for transferring an object to a new position. However, this requires several repetitions for each new target until the task is learned.
The prediction of the goal in a high-five application with DMPs using a KF was proposed  in \cite{ProIP_2014}.
In \cite{DMP_EKF_handover} the idea of using an EKF was introduced to predict on-line the intended handover position and time based on the current position of the human hand and  a DMP that parameterizes its motion. 
Human-robot handover without involving any  prediction of the handover pose and time is considered in \cite{Sidiropoulos2019_handover}.
However,  in \cite{ProIP_2014, DMP_EKF_handover, Sidiropoulos2019_handover}, the human and the robot are not physically coupled except at the end of the motion.

In our previous work, human-robot collaborative object transfer is addressed, focusing only on Cartesian position  \cite{Antosidi_HRCOT_2019}.
In this paper, we extend our method to include not only the position but also the target orientation. 
The robot is only aware of the pattern of motion, but is agnostic to the target pose and how fast the movement should be executed (time scaling). The objective is to render the robot proactive in the execution of the learned motion pattern scaled to unknown targets and time scales by anticipating humans' intention, thus minimizing their effort.
To the best of our knowledge all previous works on predicting human motion focus solely on the Cartesian position. A possible reason is that any parameterization of orientation is non-linear which complicates the formulation and design of the controller and observer. 
Hence, any extension including the prediction of the target orientation and its real-time use in the control loop  is far from trivial and it can  prove to be quite challenging. In this work, a modification of the typical DMP orientation formulation was required in order to solve the control problem.
The main contribution of this work is  therefore 
a combined control and prediction scheme that includes both target orientation and position and its stability proof. 
The proposed scheme's  practical efficiency is demonstrated in a number of experiments.
 
 


\section{Proposed approach} \label{sec:Proposed_approach}

Our focus is on tasks, where the motion pattern is essentially the same, but the initial/target poses and time duration of the movement can be different leading to the spatio-temporal scaling of the motion pattern. This occurs for instance in  box stacking tasks.
We make use of DMPs to encode the Cartesian pose of the robot's end-effector during a point to point motion, that is recorded from a kinesthetic demonstration. This recorded motion pattern is employed by the robot for collaborative transportation of an object to different targets and at different time scalings unknown to the robot.
Human-Human collaborative object transfer studies reveal that object motion is highly correlated with a motion pattern \cite{Fi_collab_obj_manip,Human_motion_mov_obj_case_study}.

For position, let the motion to a new target $\vect{p}_g \in \mathbb{R}^3$ with time scaling $\tau_p$, be generated by the position DMP model:
\begin{equation} \label{eq:DMP_ddpos}
    \ddvect{p} = \frac{1}{\tau_p^2}(\alpha_z \beta_z (\vect{p}_g - \vect{p}) - \alpha_z \tau_p \dvect{p} + g_f(t/\tau_p)\vect{K}_{p_g}\vect{f}_p(t/\tau_p))
\end{equation}
where $\vect{p}, \dvect{p}, \ddvect{p} \in \mathbb{R}^3$ are the position, velocity and acceleration of the robot's end-effector and the rest terms are detailed in Appendix B. 
Accordingly for the orientation, the motion to a new target orientation $\vect{Q}_g \in \mathbb{S}^3$ with time scaling $\tau_o$ is generated  by the following orientation DMP formulation:
\begin{equation} \label{eq:DMP_ddq_1}
    \ddvect{q}' = \frac{1}{\tau_o^2}(\alpha_z \beta_z(\vect{q}'_{g} - \vect{q}') - \alpha_z \tau_o \dvect{q}' + g_f(t/\tau_o)\vect{K}_{q_g}\vect{f}_o(t/\tau_o))
\end{equation}
where $\vect{q}' = \log(\vect{Q}*\bvect{Q}_0) \in \mathbb{R}^3$ with $\vect{Q}$, $ \vect{Q}_0 \in \mathbb{S}^3$ being the current and initial orientation and $\bvect{Q}_0$ the conjugate of $\vect{Q}_0$, $\log(.)$ the quaternion logarithm and $\vect{q}'_g = \log(\vect{Q}_g*\bvect{Q}_0)$, Unit quaternion preliminaries can be found in Appendix A, and more details on \eqref{eq:DMP_ddq_1} in Appendix B.

In the above DMP models, the forcing terms $\vect{f}_p, \vect{f}_o$ comprise of a weighted sum of Gaussians which encode the motion pattern. The matrices $\vect{K}_{p_g}$, $\vect{K}_{q_g}$ provide the spatial scaling of the motion based on the target and the gating function  $g_f(.)$ ensures that the forcing term vanishes at the end of the motion (Appendix B). 

\begin{remark} \label{remark:DMP_orient}
Writing the DMP model with the orientation anchored to the initial orientation $\vect{Q}_0$ is important as it decouples $\vect{q}'$ from $\vect{Q}_g$ which greatly simplifies the orientation DMP formulation in case time varying target estimates are to be utilized. This is in contrast to $\vect{q}'=\log(\vect{Q}_g*\bvect{Q})$ from \cite{DMP_orient_Koutras}, where the orientation is anchored to the target orientation $\vect{Q}_g$.
This formulation
is exploited in the design of the reference model detailed in the next Section.
\end{remark}

\begin{remark} \label{remark:time_scalings}
The use of separate time scaling variables for position and orientation, $\tau_p, \tau_o$, is adopted. This provides greater versatility and generalization, since it accommodates also the scenario of having position and orientation patterns with different temporal duration during execution. 
\end{remark}

During the collaborative object transfer we assume the robot grasps the object rigidly and compensates its weight, while a F/T sensor is mounted at its end-effector. The external force/torque exerted by the human is calculated by subtracting from the F/T sensor's measurement the object's wrench. 
It is assumed that no other contacts with the environment occur, as this would require additional sensors to discriminate contact forces from the human intended forces.
A velocity controlled robot is considered, implying that any reference velocity can be accurately tracked.
The core idea is to design a control scheme that will: 1) render the robot compliant to the forces/torques exerted by the human so that he can move the object along the trajectory intended by him, 2) make the robot proactive in the tracking of the intended trajectory, by estimating online the target and time-scaling, in order to minimize the effort required by the human. To achieve this objective we propose the control scheme depicted in Fig. \ref{fig:control_scheme} where:
\begin{itemize}
    \item The \textit{DMP based reference model} takes as input the external force/torque exerted by the human, and using the current state $\vect{p},\dvect{p}, \vect{Q}, \vect{\omega}$ (pose, velocity) and the estimates of the target pose and time scalings, $\hvect{p}_g, \hvect{Q}_g, \hat{\tau}_p, \hat{\tau}_o$ generates a DMP-model based estimated trajectory, that is shaped by the external force/torque.
    \item The \textit{DMP based EKF} takes as input the force/torque exerted by the human and the current state to produce estimates of the target pose and time-scaling.
\end{itemize}

In the subsequent sections we dwell on the formulation and design of the reference model and the observer.

\begin{figure}[!ht]
	\centering
		\includegraphics[scale=0.19]{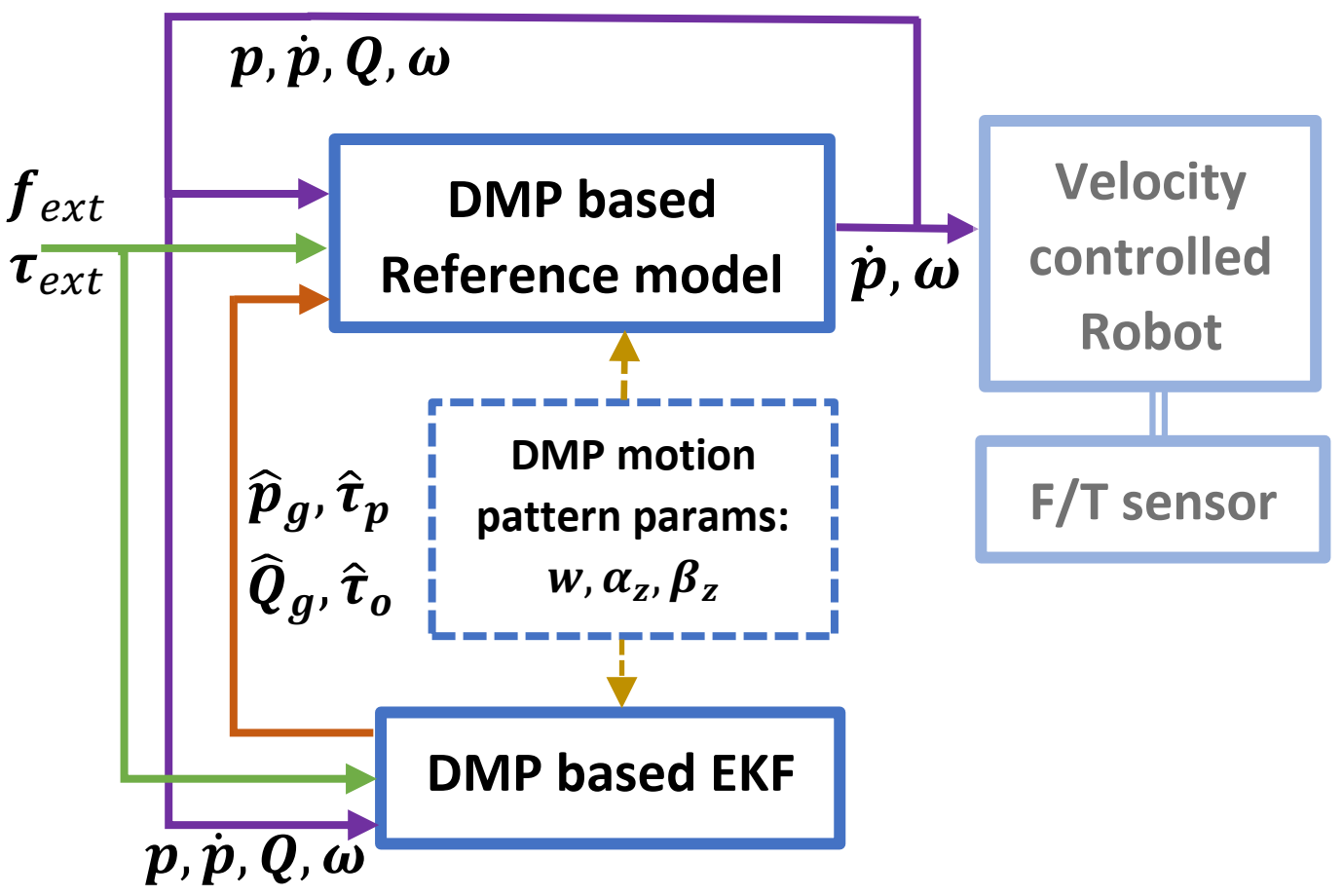}
	\caption{Proposed approach.}
	\label{fig:control_scheme}
\end{figure}


\section{Reference model} \label{sec:Ref_model}

The reference model for position is formulated as:
\begin{equation} \label{eq:robot_model_ref_p}
    \vect{M}_p \ddvect{p} = \vect{M}_p\hat{\ddvect{p}} + \vect{f}_{ext}
\end{equation}
\begin{equation} \label{eq:DMP_ddp_hat}
    \hat{\ddvect{p}} = \frac{1}{\hat{\tau}_p^2}(\alpha_z \beta_z (\hvect{p}_g - \vect{p}) - \alpha_z \hat{\tau}_p \dvect{p} + g_f(t/\hat{\tau}_p)\hvect{K}_{p_g}\vect{f}_p(t/\hat{\tau}_p)) \nonumber
\end{equation}
where $\vect{f}_{ext}$ is the force exerted by the human, $\vect{M}_p = diag(m_x,m_y,m_z)>0$ is the reference model's inertia which weights the effect of $\vect{f}_{ext}$ on the reference model's trajectory
and $\hat{\ddvect{p}}$ is the estimated acceleration based on the position DMP model \eqref{eq:DMP_ddpos}, using the estimated target $\hvect{p}_g$ and time scaling $\hat{\tau}_p$.

Similar to position, the reference model for orientation is formulated as:
\begin{equation} \label{eq:robot_model_ref_o}
    \vect{M}_o \dvect{\omega} = \vect{M}_o\hat{\dvect{\omega}} + \vect{\tau}_{ext}
\end{equation}
\begin{equation} \label{eq:DMP_dvRot_hat}
    \hat{\dvect{\omega}} = vec\{2 (\vect{J}_{q'}\dvect{q}' + \vect{J}_{q'}\hat{\ddvect{q}}')*\bvect{Q}'  - \frac{1}{2}\begin{bmatrix} ||\vect{\omega}||^2 \\ \vect{0}_{3 \times 1} \end{bmatrix} \}
\end{equation}
with
\begin{equation} \label{eq:DMP_ddq_hat}
     \hat{\ddvect{q}}' = \frac{1}{\hat{\tau}_o^2}(\alpha_z \beta_z(\hvect{q}'_{g} - \vect{q}') - \alpha_z \hat{\tau_o} \dvect{q}' + g_f(t/\hat{\tau}_o)\hvect{K}_{q_g}\vect{f}_o(t/\hat{\tau}_o))
\end{equation}
where $\vect{M}_o = diag(I_x,I_y,I_z)>0$ is the reference model's inertia, $\vect{\tau}_{ext}$ is the torque exerted by the human and $\hat{\dvect{\omega}}$ is the estimated rotational acceleration corresponding to the estimated acceleration $\hat{\ddvect{q}}'$ (see \eqref{eq:dvRot_ddq} from Appendix A). Finally $vec\{.\}$ denotes the vector part of a quaternion.
Equation \eqref{eq:DMP_ddq_hat} is derived from \eqref{eq:DMP_ddq_1} 
using the estimated target $\hvect{q}'_g = \log(\hvect{Q}_g*\bvect{Q}_0)$
and time scaling $\hat{\tau}_o$. 

\begin{remark}
Further to remark \ref{remark:DMP_orient}, notice that in \eqref{eq:robot_model_ref_o}, \eqref{eq:DMP_dvRot_hat}, if
$\vect{q}'=\log(\vect{Q}_g*\bvect{Q})$ was used instead of $\vect{q}'=\log(\vect{Q}*\bvect{Q}_0)$
then apart from $\hvect{q}'_g$, all other terms would also be dependent on $\hvect{Q}_g$ (and some of them on its derivative too). This would complicate the design of the reference model and  the results of the stability analysis in Section \ref{sec:Stability_analysis} would most likely be inconclusive.
\end{remark}



\section{DMP-based EKF observer for target and time scale prediction} \label{sec:Proposed_EKF}

We assume that the trajectory generated from the reference model \eqref{eq:robot_model_ref_p}, \eqref{eq:robot_model_ref_o} corresponds to the trajectory produced by the DMP models  \eqref{eq:DMP_ddpos}, \eqref{eq:DMP_ddq_1} for constant target $\vect{p}_g, \vect{Q}_g$ and time-scalings $\tau_p,\tau_o$, which we want to estimate. 
This assumption is reasonable because, as highlighted in Section \ref{sec:Proposed_approach}, we focus on tasks where the motion pattern is essentially the same, but the initial/target poses and time duration of the movement can be different leading to the spatio-temporal scaling of the motion pattern.
To this end, we construct a DMP-based EKF with states $\vect{\theta}_p = \left[\vect{p}_g^T \ \tau_p \right]^T$, $\vect{\theta}_o = \left[{\vect{q}'_g}^{T} \ \tau_o \right]^T$ with the following state and measurement equations for $i \in \{p,o\}$,:
\begin{equation} \label{eq:state_trans_fun}
    \dvect{\theta}_i = \vect{0}_{4 \times 1} 
\end{equation}
\begin{equation} \label{eq:msr_fun}
    \vect{z}_i = \vect{h}_i(\vect{\theta}_i, \vect{s}_i, t) 
\end{equation}
 where $\vect{z}_p = \ddvect{p}$, $\vect{z}_o = \dvect{\omega}$, $\vect{s}_p = [\vect{p}^T \ \dvect{p}^T]^T$, $\vect{s}_o = [\vect{Q}^{'T} \ \vect{\omega}^T]^T$ and:
\begin{equation*} \label{eq:DMP_hp}
    \vect{h}_p(\vect{\theta}_p, \vect{s}_p, t) \triangleq 
    \frac{1}{\tau_p^2}(\alpha_z \beta_z (\vect{p}_g - \vect{p}) - \alpha_z \tau_p \dvect{p} + g_f(\frac{t}{\tau_p})\vect{K}_{p_g}\vect{f}_p(\frac{t}{\tau_p})) 
\end{equation*}
\begin{equation*} \label{eq:DMP_ho}
    \vect{h}_o(\vect{\theta}_o, \vect{s}_o, t) \triangleq 
    vec\{2 (\vect{J}_{q'}\dvect{q}' + \vect{J}_{q'}\ddvect{q}')*\bvect{Q}'\}
\end{equation*}
with $\ddvect{q}'$ given by \eqref{eq:DMP_ddq_1}.

For each system $i \in \{p,o\}$ in \eqref{eq:state_trans_fun}-\eqref{eq:msr_fun} we construct a fading memory EKF observer \cite{1998_modified_EKF, Simon_KF} with projection \cite{Simon_KF} and normalization \cite{Robust_adapt_control_Ioannou}:
\begin{equation} \label{eq:observer}
    \dot{\hvect{\theta}}_i = \vect{K}_i ( \vect{z}_i -  \hat{\vect{z}}_i )
\end{equation}
\begin{equation} \label{eq:measure_est}
    \hvect{z}_i = \vect{h}_i(\hvect{\theta}_i, \vect{s}_i, t)
\end{equation}
where $\hvect{\theta}_i$ is the state estimate of the $i_{th}$ observer and $\vect{K}_i \in \mathbb{R}^{4 \times 3}$ is a time varying gain matrix given by:
\begin{equation} \label{eq:observer_gain}
    \vect{K}_i = \vect{N}_i(t)\vect{P}_i(t) \bvect{C}_i^T(t) \vect{R}_i^{-1}
\end{equation}
where $\vect{N}_i(t)$ is a projection matrix which ensures the estimates respect certain bounds, $\vect{P}_i(t)$ is given by the solution of the the following equation
\cite{1998_modified_EKF}, \cite{Simon_KF}:
\begin{equation} \label{eq:P_saturate}
    \dot{\vect{P}}_i = 
    \begin{cases}
        2 a_p \vect{P}_i + (\vect{K}_i\vect{R}_i-\vect{P}_i\bvect{C}_i^T)\vect{R}_i^{-1}(\vect{K}_i\vect{R}_i-\vect{P}_i\bvect{C}_i^T)^T
        \\  + \vect{Q}_i - \vect{P}_i \bvect{C}_i^T(t) \vect{R}_i^{-1} \bvect{C}_i \vect{P}_i   \quad ,  \ ||\vect{P}_i|| \le \rho_2 \\
        \vect{0} \ , \ otherwise
    \end{cases}
\end{equation}
$a_p > 0$, $\vect{R}_i$, $\vect{Q}_i$ are the measurement and process noise covariance matrices and $\bar{\vect{C}}_i(t)$ the linearized measurement equation matrix with normalization found by $\bar{\vect{C}}_i(t) = \vect{C}_i(t) / c_{n,i}$, 
where $\vect{C}_i(t) = \left. \frac{\partial \vect{h}_i(\vect{\theta}_i, \vect{u}_i, t) }{\partial \vect{\theta}_i} \right|_{\vect{\theta}_i=\hvect{\theta}_i }$ and $c_{n,i} = \sqrt{1 + \lambda_{max}(\vect{C}_i(t)\vect{C}_i^T(t))}$. Normalization is used to ensure the boundedness of the observer's update law, irrespective of the boundedness of $\vect{s}_i$ \cite{Robust_adapt_control_Ioannou}.  
Notice that $\vect{C}_i(t) = \left. \frac{\partial \vect{h}_i(\vect{\theta}_i, \vect{u}_i, t) }{\partial \vect{\theta}_i} \right|_{\vect{\theta}_i=\hvect{\theta}_i }$ can be computed analytically, similarly to \cite{DMP_EKF_handover}, but its analytical expression is omitted here due to space restrictions.
It is worth mentioning that if $\log(\vect{Q}_g*\bvect{Q})$ was used instead of $\vect{q}'_g$, then the analytic derivation becomes practically intractable and one would have to resort to numerical differentiation, increasing the observer's computational complexity and possibly compromising its performance.
The design constant $\rho_2>0$ is used to ensure that the covariance matrix remains bounded.
The projection matrix $\vect{N}_i$ is derived from the constraints $\vect{D}_i \vect{\theta}_i \le \vect{d}_i$ based on a least squares approach \cite{Simon_KF},
where $\vect{d}_i = [\bar{\vect{\theta}}_i^T \ -\underline{\vect{\theta}}_i^T]^T$ and $\vect{D}_i = \begin{bmatrix} \ \vect{I}_{4} \ -\vect{I}_{4} \end{bmatrix}^T$. Bounds $\bar{\vect{\theta}}_i, \ \underline{\vect{\theta}}_i$ stem from the physical interpretation of the estimated parameters, i.e. the target position is constrained by the robot's workspace. Moreover, the time scalings are positive and finite. 
The projections matrix $\vect{N}_i$ is then given by \cite{Simon_KF}:
\begin{equation} \label{eq:proj_mat}
    \vect{N}_i = 
    \left( \vect{I}_{4} - \bar{\vect{D}}_i^T (\bar{\vect{D}_i} \bar{\vect{D}}_i^T)^{-1} \bar{\vect{D}}_i \right)
\end{equation}
with the active constraints satisfying $\bar{\vect{D}}_i \hat{\vect{\theta}}_i = \bar{\vect{d}}_i$ and $\bar{\vect{D}}_{i,j} \dot{\hvect{\theta}}_i > \vect{0}$, where $\bar{\vect{D}}_{i,j}$ is the $j_{th}$ row of $\bar{\vect{D}}_i$ and $\bar{\vect{D}}_i, \bar{\vect{d}}_i$ are subset of the rows of $\vect{D}_i, \vect{d}_i$ respectively.

Finally, the measurement error can be obtained from \eqref{eq:robot_model_ref_p}, \eqref{eq:robot_model_ref_o}, i.e. $\vect{z}_p - \hvect{z}_p = \ddvect{p} - \hat{\ddvect{p}} = \vect{M}_p^{-1}\vect{f}_{ext}$ and $\vect{z}_o - \hvect{z}_o = \dvect{\omega} - \hat{\dvect{\omega}} = \vect{M}_o^{-1}\vect{\tau}_{ext}$. Therefore, for the update law (\ref{eq:observer}) we can utilize the following expression:
\begin{equation} \label{eq:dtheta_hat}
    \dot{\hvect{\theta}}_i = \vect{K}_i \vect{M}_i^{-1}\frac{\vect{\nu}_i}{c_{n,i}}
\end{equation}
where $\vect{\nu}_p = \vect{f}_{ext}$ and $\vect{\nu}_o = \vect{\tau}_{ext}$.


\begin{remark}
For an observer of the form \eqref{eq:observer}, the covariance matrix $\vect{P}$ satisfies \cite{Murray_OptBasedCtrl} $\dvect{P} = (\vect{K}\vect{R} - \vect{P}\vect{C}^T)\vect{R}^{-1}(\vect{K}\vect{R} - \vect{P}\vect{C}^T)^T 
    + \vect{Q} - \vect{P}\vect{C}^T\vect{R}\vect{C}\vect{P}$
for any matrix $\vect{K}$. 
In the case of the fading memory filter \cite{Simon_KF}, \cite{1998_modified_EKF} the term $a_p\vect{P}$, $a_p>0$ is added to $\dvect{P}$. 
The Kalman gain is derived by minimizing $\dvect{P}$ w.r.t. $\vect{K}$ which yields $\vect{K} = \vect{P}\vect{C}^T\vect{R}^{-1}$.
When projection is employed, the Kalman gain is further modified by the projection matrix $\vect{N}$ \cite{Simon_KF}, resulting in \eqref{eq:observer_gain} which in turn yields the covariance matrix update \eqref{eq:P_saturate}.
\end{remark}

\begin{remark}
We chose specifically the EKF with fading memory since is has been shown to be robust against non-linearities (see \cite{1998_modified_EKF} and Chapter $7.4$ from \cite{Simon_KF}). 
Moreover, we compared it to the UKF (Unscented Kalman Filter) and we found the EKF to perform slightly better in our case, which has also been observed in other applications \cite{EKF_vs_UKF_quatMotion2003,EKF_vs_UKF_gps2004,EKF_vs_UKF_fnn2012}.
\end{remark}

\section{Stability Analysis} \label{sec:Stability_analysis}

Due to the nonlinearity of the motion pattern with respect to the target and time scaling, the estimates of the EKF are guaranteed to converge to the actual ones locally, as studied in \cite{DMP_EKF_handover}. Thanks to the projection however the estimates remain bounded. However, this does not imply that the reference model's state (hence the robot's state) will remain bounded, which is crucial. Concerning this, the following theorem can be proven regarding the reference model:

\begin{theorem} \label{theo:stability}
   The model reference given by \eqref{eq:robot_model_ref_p}, \eqref{eq:robot_model_ref_o} along with the observer based on the EKF given by \eqref{eq:dtheta_hat} ensures that $\vect{p},\dvect{p}, \vect{Q}, \vect{\omega} \in \vect{L}_{\infty}$ if $\vect{f}_{ext},\vect{\tau}_{ext} \in \vect{L}_{\infty} \cap \vect{L}_2$.
\end{theorem}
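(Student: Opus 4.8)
\emph{Proof plan.} The plan is to first extract from the observer the two properties of the parameter estimates that enter the reference model as time-varying coefficients, and then to treat each of \eqref{eq:robot_model_ref_p} and \eqref{eq:robot_model_ref_o} as a mass--spring--damper with bounded, slowly varying parameters driven by bounded inputs, proving boundedness through a Lyapunov argument closed with the comparison lemma. I would begin with the auxiliary bounds. The projection matrix $\vect{N}_i$ in \eqref{eq:proj_mat} keeps $\hvect{\theta}_p=[\hvect{p}_g^T\ \hat{\tau}_p]^T$ and $\hvect{\theta}_o=[\hvect{q}'^T_g\ \hat{\tau}_o]^T$ inside the box $\underline{\vect{\theta}}_i\le\hvect{\theta}_i\le\bar{\vect{\theta}}_i$, so $\hvect{p}_g,\hvect{q}'_g$ are bounded and $\hat{\tau}_p,\hat{\tau}_o\in[\underline{\tau},\bar{\tau}]$ with $\underline{\tau}>0$. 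Moreover, the saturation in \eqref{eq:P_saturate} keeps $\|\vect{P}_i\|\le\rho_2$, $\vect{N}_i$ is a projection, $\bvect{C}_i$ is normalized, and $\vect{R}_i$ is constant, so the gain \eqref{eq:observer_gain} obeys $\|\vect{K}_i\|\le K_{\max}$. Using $c_{n,i}\ge1$, the update law \eqref{eq:dtheta_hat} then gives $|\dot{\hvect{\theta}}_i|\le K_{\max}\|\vect{M}_i^{-1}\|\,|\vect{\nu}_i|$, so that $\dot{\hvect{\theta}}_p,\dot{\hvect{\theta}}_o\in\vect{L}_\infty\cap\vect{L}_2$ because $\vect{\nu}_p=\vect{f}_{ext},\ \vect{\nu}_o=\vect{\tau}_{ext}\in\vect{L}_\infty\cap\vect{L}_2$. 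Finally, the forcing terms $g_f\hvect{K}_{p_g}\vect{f}_p$ and $g_f\hvect{K}_{q_g}\vect{f}_o$ are bounded, since $\vect{f}_p,\vect{f}_o$ are weighted Gaussians, the gains depend on the bounded targets, and $g_f$ is bounded and vanishes at the end of the motion.

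Next I would analyze the position channel. Eliminating $\hat{\ddvect{p}}$ from \eqref{eq:robot_model_ref_p}, \eqref{eq:DMP_ddp_hat} and setting $\vect{e}_p=\vect{p}-\hvect{p}_g$ yields a mass--spring--damper in $(\vect{e}_p,\dvect{p})$ with time-varying stiffness $\tfrac{\alpha_z\beta_z}{\hat{\tau}_p^2}$ and damping $\tfrac{\alpha_z}{\hat{\tau}_p}$, forced by $\vect{w}_p=\tfrac{1}{\hat{\tau}_p^2}g_f\hvect{K}_{p_g}\vect{f}_p+\vect{M}_p^{-1}\vect{f}_{ext}\in\vect{L}_\infty$ and by $\dot{\hvect{p}}_g$. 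I would take the candidate $V_p=\tfrac12|\dvect{p}|^2+\tfrac12\tfrac{\alpha_z\beta_z}{\hat{\tau}_p^2}|\vect{e}_p|^2+\eta\,\vect{e}_p^T\dvect{p}$, which is positive definite for $\eta>0$ small enough (using $\hat{\tau}_p\le\bar{\tau}$). Along the trajectories the spring cross terms cancel, the coupling term $\eta\vect{e}_p^T\ddvect{p}$ supplies the otherwise missing negative term $-\eta\tfrac{\alpha_z\beta_z}{\hat{\tau}_p^2}|\vect{e}_p|^2$ so that dissipation now acts on both $\vect{e}_p$ and $\dvect{p}$, and after dominating the residual cross term and the input terms (those carrying $\vect{w}_p$ and $\dot{\hvect{p}}_g$) by Young's inequality, one arrives at $\dot{V}_p\le(-\lambda+c_1|\dot{\hat{\tau}}_p|)V_p+\beta_p(t)$ with $\lambda>0$ and $\beta_p\in\vect{L}_\infty$. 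The only nonstandard term is $c_1|\dot{\hat{\tau}}_p|V_p$, produced by differentiating the time-varying stiffness in $V_p$.

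I would then close the position argument with the comparison lemma. By Cauchy--Schwarz, $\int_s^t|\dot{\hat{\tau}}_p|\,dr\le\sqrt{t-s}\,\|\dot{\hat{\tau}}_p\|_2$, so the transition kernel is dominated by $e^{-\lambda(t-s)+c_1\|\dot{\hat{\tau}}_p\|_2\sqrt{t-s}}$, which is bounded and integrable on $[0,\infty)$; the homogeneous part therefore decays and the convolution against the bounded $\beta_p$ stays finite, giving $V_p\in\vect{L}_\infty$ and hence $\vect{p},\dvect{p}\in\vect{L}_\infty$. The orientation channel reduces to the same structure: by the $\vect{Q}_0$-anchored formulation of Remark \ref{remark:DMP_orient}, $\hat{\ddvect{q}}'$ in \eqref{eq:DMP_ddq_hat} is an identical mass--spring--damper in $(\vect{q}',\dvect{q}')$, and subtracting \eqref{eq:DMP_dvRot_hat} from the kinematic map \eqref{eq:dvRot_ddq} cancels the common $\vect{J}_{q'}\dvect{q}'$ and $\|\vect{\omega}\|^2$ terms, leaving $\ddvect{q}'-\hat{\ddvect{q}}'$ as the image of $\vect{M}_o^{-1}\vect{\tau}_{ext}$ under a map that is invertible while $\vect{J}_{q'}$ is nonsingular (i.e.\ $\vect{q}'$ stays inside the valid log chart); this input is then $\vect{L}_\infty\cap\vect{L}_2$, and the same two steps give $\vect{q}',\dvect{q}'\in\vect{L}_\infty$. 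Boundedness of $\vect{Q}$ is immediate, since $\vect{Q}=\exp(\vect{q}')*\vect{Q}_0$ is a unit quaternion, and $\vect{\omega}$ is the image of the bounded $(\vect{q}',\dvect{q}')$ under the quaternion Jacobian of \eqref{eq:dvRot_ddq}, which is bounded in the forward direction (no inversion of $\vect{J}_{q'}$ is required), so $\vect{\omega}\in\vect{L}_\infty$.

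The main obstacle is the slowly varying stiffness term $c_1|\dot{\hat{\tau}}_p|V_p$: because $\dot{\hat{\tau}}_p$ is only $\vect{L}_2$ and not $\vect{L}_1$, a direct Gr\"onwall estimate fails, and the argument hinges on the $\sqrt{t-s}$ domination above, which is exactly why the hypothesis $\vect{f}_{ext},\vect{\tau}_{ext}\in\vect{L}_2$ (and not merely $\vect{L}_\infty$) is needed; the secondary subtlety is ensuring $\vect{q}'$ remains in the region where $\vect{J}_{q'}$ is nonsingular so that the orientation input stays bounded.
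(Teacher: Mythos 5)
Your proposal is correct, and its overall skeleton coincides with the paper's: you extract the same two observer facts (projection keeps $\hvect{\theta}_i$ in a box with $\hat{\tau}_i \ge \underline{\tau} > 0$; bounded gain, normalization and $\vect{\nu}_i \in \vect{L}_{\infty} \cap \vect{L}_2$ give $\dot{\hvect{\theta}}_i \in \vect{L}_{\infty} \cap \vect{L}_2$ from \eqref{eq:dtheta_hat}), and your reduction of the orientation channel---subtracting the reference model from the kinematic map \eqref{eq:dvRot_ddq} so the common $\dvect{J}_{q'}\dvect{q}'$ and $||\vect{\omega}||^2$ terms cancel, then inverting via $\vect{J}_{Q'}\vect{J}_{q'}=\vect{I}_3$---is exactly how the paper arrives at \eqref{eq:model_ref_rewritten}. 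Where you genuinely diverge is the core boundedness lemma for the resulting second-order system with time-varying coefficient $\hat{\tau}$. The paper writes it as $\dot{\vect{\zeta}} = \vect{A}(t)\vect{\zeta} + \vect{B}d$, notes that $\vect{A}(t)$ is pointwise Hurwitz with uniform margin and that $||\dot{\vect{A}}(t)|| \in \vect{L}_{\infty}\cap\vect{L}_2$, invokes Theorem 3.4.11 of \cite{Robust_adapt_control_Ioannou} (slowly time-varying systems) to get uniform asymptotic stability of the homogeneous part, and then uses a converse-Lyapunov matrix $\vect{\Pi}(t)$ to conclude uniform ultimate boundedness; the position channel is dispatched by citation to \cite{Antosidi_HRCOT_2019}. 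You instead build an explicit cross-term Lyapunov function $V=\tfrac12|\dvect{p}|^2+\tfrac{\alpha_z\beta_z}{2\hat{\tau}^2}|\vect{e}|^2+\eta\,\vect{e}^T\dvect{p}$, absorb the derivative of the time-varying stiffness into a $c_1|\dot{\hat{\tau}}|V$ term, and close with the comparison lemma via the Cauchy--Schwarz estimate $\int_s^t|\dot{\hat{\tau}}|\,dr \le \sqrt{t-s}\,\|\dot{\hat{\tau}}\|_2$, which makes the kernel $e^{-\lambda(t-s)+c\sqrt{t-s}}$ bounded and integrable (its exponent peaks at $c^2/4\lambda$ and decays like $e^{-\lambda u/2}$ thereafter), so the Gr\"onwall convolution against the bounded $\beta_p$ stays finite. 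This argument is sound and buys self-containedness: no appeal to the slowly-varying-systems theorem or converse Lyapunov theory, and it pinpoints where the $\vect{L}_2$ (not merely $\vect{L}_{\infty}$) hypothesis on the wrench enters---the same place the paper uses it, through $\dot{\vect{A}}\in\vect{L}_2$. What the paper's route buys is brevity and the stronger intermediate conclusion (uniform global asymptotic stability of the unforced system). One remark on your flagged subtlety: the requirement that $\vect{q}'$ stay inside the chart so that $\vect{J}_{Q'}$ remains bounded is not resolved by the paper either---it simply lists $\vect{J}_{Q'}\in\vect{L}_{\infty}$ among the reasons the disturbance $\vect{d}$ is bounded---so this is a shared implicit assumption, not a gap of yours relative to the paper; likewise your use of \eqref{eq:omega_dq} (the forward map, needing no inversion) to bound $\vect{\omega}$ matches the paper's final step.
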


\begin{proof}
The proof for the boundedness of $\vect{p}, \dvect{p}$ given that $\vect{f}_{ext} \in L_2 \cap L_{\infty}$ is identical to the proof in \cite{Antosidi_HRCOT_2019}. The orientation is also bounded since $\vect{Q} \in \mathbb{S}^3$. For the rotational velocity from \eqref{eq:robot_model_ref_o}, \eqref{eq:DMP_dvRot_hat}
we get:
\begin{equation*}
    \dvect{\omega} =  vec\{2 (\vect{J}_{q'}\dvect{q}' + \vect{J}_{q'}\hat{\ddvect{q}}')*\bvect{Q}' - \frac{1}{2}\begin{bmatrix} ||\vect{\omega}||^2 \\ \vect{0}_{3 \times 1} \end{bmatrix} \} + \vect{M}_o^{-1}\vect{\tau}_{ext}
\end{equation*}
Substituting $\dvect{\omega}$ from \eqref{eq:dvRot_ddq}, replacing  $\vect{Q}$ by $\vect{Q}'$ and $\vect{q}$ by $\vect{q}'$ we can arrive after some mathematical calculations at:
\begin{align*}
    &\vect{J}_{q'}(\ddvect{q}' - \hat{\ddvect{q}}') = \frac{1}{2}\vect{T}_{ext}*\vect{Q}'
\end{align*}
where $\vect{T}_{ext} = [ 0 \ (\vect{M}_o^{-1}\vect{\tau}_{ext})^T ]^T$. Multiplying both sides by $\vect{J}_{Q'}$ and using the identity $\vect{J}_{Q'}\vect{J}_{q'} = \vect{I}_3$ \cite{DMP_orient_Koutras}, we arrive at:
\begin{equation*}
    \ddvect{q}' - \hat{\ddvect{q}}' = \frac{1}{2}\vect{J}_{Q'}(\vect{T}_{ext}*\vect{Q}')
\end{equation*}
Substituting $\hat{\ddvect{q}}'$ from  \eqref{eq:DMP_ddq_hat} we obtain:
\begin{equation} \label{eq:model_ref_rewritten}
    \hat{\tau}_o^2\ddvect{q}' =  -\alpha_z \beta_z \vect{q}' - \alpha_z \hat{\tau}_o \dvect{q}' + \hat{\tau}_o^2\vect{d}
\end{equation}
where $\vect{d} = \frac{1}{\hat{\tau}_o^2} ( \alpha_z \beta_z \hvect{q}'_g + g_f(\hat{x})\hvect{K}_{q_g}\vect{f}_o(\hat{x}) + \frac{1}{2}\vect{J}_{Q'}(\vect{T}_{ext}*\vect{Q}'))$ can be viewed as a time varying bounded disturbance since $\hat{\tau}_o, \hvect{q}'_g, \vect{T}_{ext}, \vect{Q}', \vect{J}_{Q'}, \vect{f}_o(\hat{x}) \in \vect{L}_{\infty}$.
It is important to highlight that the derivation of  (15) was made possible by the modified orientation DMP introduced in this work (Appendix B). 
Thus, for the rest of the proof we can follow  \cite{Antosidi_HRCOT_2019}, to show that $\vect{q}', \dvect{q}' \in \vect{L}_{\infty}$ hence $\vect{q}, \dvect{q} \in \vect{L}_{\infty}$, therefore from \eqref{eq:omega_dq} we conclude that $\vect{\omega} \in \vect{L}_{\infty}$.

The proof for the boundedness of $\vect{q}, \dvect{q}$ is provided for completeness in Appendix C.
\end{proof}
Notice that $\vect{f}_{ext}, \vect{\tau}_{ext} \rightarrow \vect{0}$, \eqref{eq:dtheta_hat} implies $\dot{\hvect{\theta}}_i \rightarrow 0$, 
and from \eqref{eq:robot_model_ref_p}, \eqref{eq:robot_model_ref_o}  $\vect{p} \rightarrow \hvect{p}_g$, $\dvect{p} \rightarrow \vect{0}$, $\vect{Q} \rightarrow \hvect{Q}_g$, $\vect{\omega} \rightarrow \vect{0}$. Considering that the condition $\vect{f}_{ext}, \vect{\tau}_{ext} \rightarrow \vect{0}$ is met iff the human has reached the desired target pose, this further implies that $\hvect{p}_g \rightarrow \vect{p}_g$, $\hvect{Q}_g \rightarrow \vect{Q}_g$. 

\begin{remark}
Theorem \ref{theo:stability} holds for any observer combined with the reference model \eqref{eq:robot_model_ref_p}, \eqref{eq:robot_model_ref_o}  as long as it produces bounded estimates and the estimation update law is bounded and has bounded energy, i.e. $\hvect{\theta} \in L_{\infty}$ and $\dot{\hvect{\theta}} \in L_{\infty} \cap L_2$.
\end{remark}


\section{Experimental results} \label{sec:Experimental_results}
The experimental setup consists of a Kuka LWR4+ robot equipped with an ATI F/T sensor at its wrist. A rectangular long box is mounted at the robot's wrist as shown in Fig. \ref{fig:exp_setup}.
The object's dynamics were identified offline and were compensated during the experiments.
The training phase involved a single demonstration  by kinesthetically guiding the robot holding the box from the other side, and training a DMP for position and orientation. During the collaborative object transfer to new targets from different initial poses the robot is  under velocity control and is driven by the reference velocity produced by the proposed approach DMP+EKF \eqref{eq:robot_model_ref_p}, \eqref{eq:robot_model_ref_o}. We have further implemented an admittance controller for comparison. The control cycle was set to $2ms$. 
The parameters chosen for the DMP are $N=30$, $\alpha_z=40$, $\beta_z=10$, for the reference model $\vect{M}_p=2\vect{I}_3$, $\vect{M}_o=0.1\vect{I}_3$, and for the observer $\vect{P}_{p}(0) = \vect{P}_{o}(0) = diag(1,1,1,10)$, $\vect{R}_p = \vect{R}_o = 2000\vect{I}_3$, $\vect{Q}_p = \vect{Q}_o = 0.001\vect{I}_4$, $a_p=1.001$, $\rho_2 = 10000$, $\bar{\vect{\theta}}_p = [0.75 \ 0.7 \ 0.95 \ 60.0]^T$, $\underline{\vect{\theta}}_p = [-0.75 \ -0.7 \ -0.2 \ 1.0]^T$, $\bar{\vect{\theta}}_o = [2\pi \ 2\pi \ 2\pi \ 60.0]^T$, $\underline{\vect{\theta}}_o = [-2\pi \ -2\pi \ -2\pi \ 1.0]^T$. For the EKF the discrete implementation was employed \cite{Simon_KF}. The admittance model  $\vect{M}\dvect{V} + \vect{D}\vect{V} = \vect{F}_{ext}$, with $\vect{V}=[\dvect{p}^T \ \vect{\omega}^T]^T$ and $\vect{F}_{ext}=[\vect{f}_{ext}^T \ \vect{\tau}_{ext}^T]^T$
was utilized with the following  parameters, tuned manually for best performance: $\vect{M}=diag(1.3\vect{I}_3, 0.08\vect{I}_3)$ and $\vect{D}=diag(25\vect{I}_3,0.6\vect{I}_3)$.
In all scenarios, the desired target pose is about $1$ cm from the placing surface, so that no contact forces emerge.

Results for three different collaborative object transfers are shown in Fig. \ref{fig:exp3_est}. In all cases the initial target estimate was set equal to the robot's initial pose. 
The initial time scaling estimate was set to $6$, while the demonstrations duration was $4.7$ sec. Thus we make a more conservative initial estimate of the time scaling by assuming that the motion will be executed slower than in the demonstration.
The DMP clock and the estimation process commence when $||\vect{F}_{ext}||>1$ N, to synchronize the robot with the human interaction.
Notice that the estimates converge to a small region around the actual target either before the end of the motion (e.g. $\hvect{p}_{g,y}$ and $\hvect{q}'_x$ in experiment $1$) or by  evolving towards the right direction (e.g. $\hvect{p}_{g,y}$ and $\hvect{q}'_z$ in experiment $2$). This is crucial as it contributes considerably to the reduction of the human's effort, as can be seen in Fig. \ref{fig:exp3_power}, where the absolute power required by the user using DMP+EKF (blue graphs) is compared to using admittance (red graphs). 
Concerning the times scaling, increase in the estimates means that the estimator infers that the motion is to be executed slower, while decrease implies the converse. Moreover, all time scaling estimates converge ultimately to a steady state value.

We have further conducted experiments of a collaborative object transfer to a new target with 5 users. Each user repeated the experiment 5 times to extract some statistical measures. In all cases the initial target estimate was set equal to the robot's initial pose. In Fig. \ref{fig:multi_power} the average power $\pm$ the standard deviation over five repetitions are depicted for each user. 
Using DMP+EKF the total absolute work for all users varied within $[1.28, \ 2.11] \ J$, the mean $L_2$ norm of the force $[0.77, \ 1.17] \ N$ and the mean $L_2$ norm of the torque $[0.13, \ 0.17] \ Nm$. In contrast, with admittance, the work varied within $[7.1, \ 11.9] \ J$, the mean $L_2$ norm of the force $[4.26, \ 6.36] \ N$ and the mean $L_2$ norm of the torque $[0.42, \ 0.71] \ Nm$.




\begin{figure}[ht]
	\centering
	\includegraphics[width=0.4\textwidth,height=0.22\textwidth]{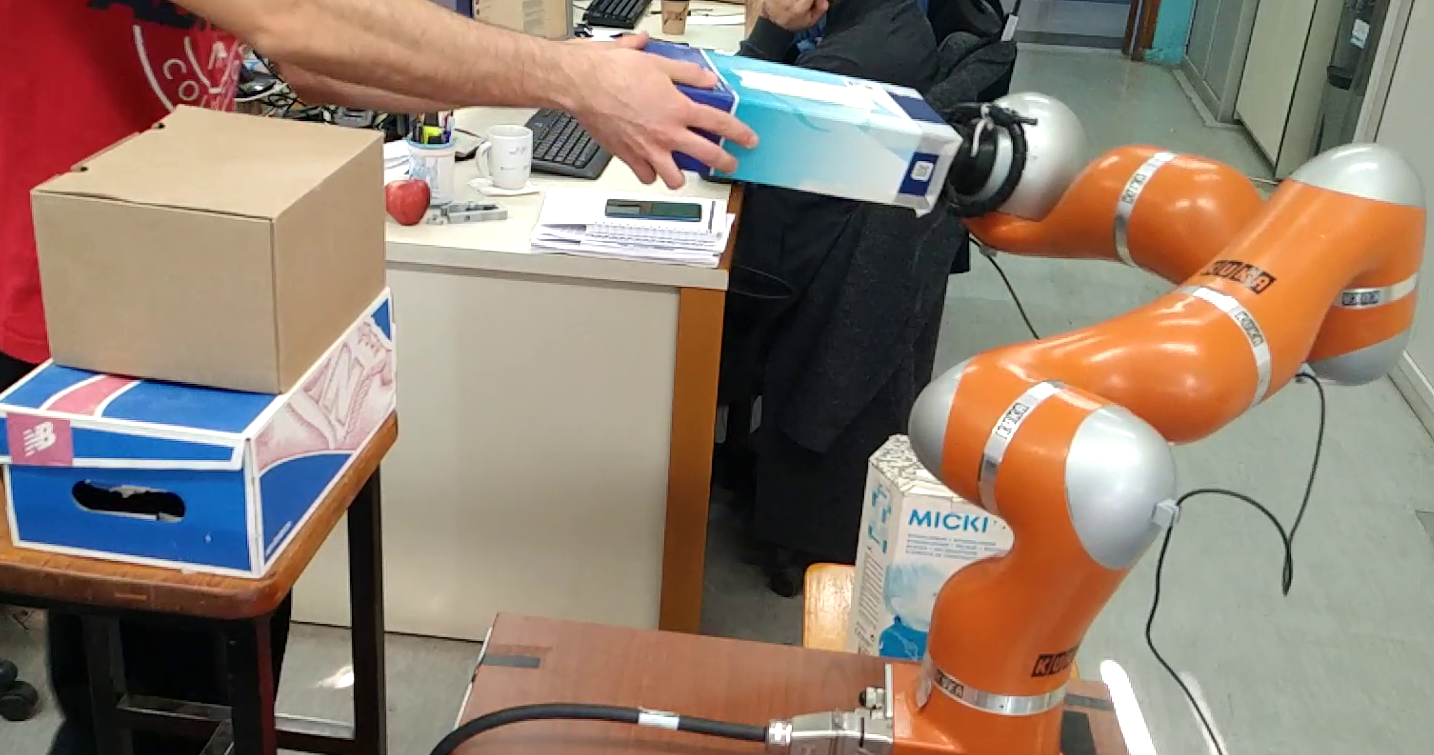}
	\caption{Experimental setup.}
	\label{fig:exp_setup}
\end{figure}

\begin{figure*}[!htbp]
    \centering
    \begin{subfigure}[b]{0.33\textwidth}
        \centering
        \includegraphics[scale=0.38]{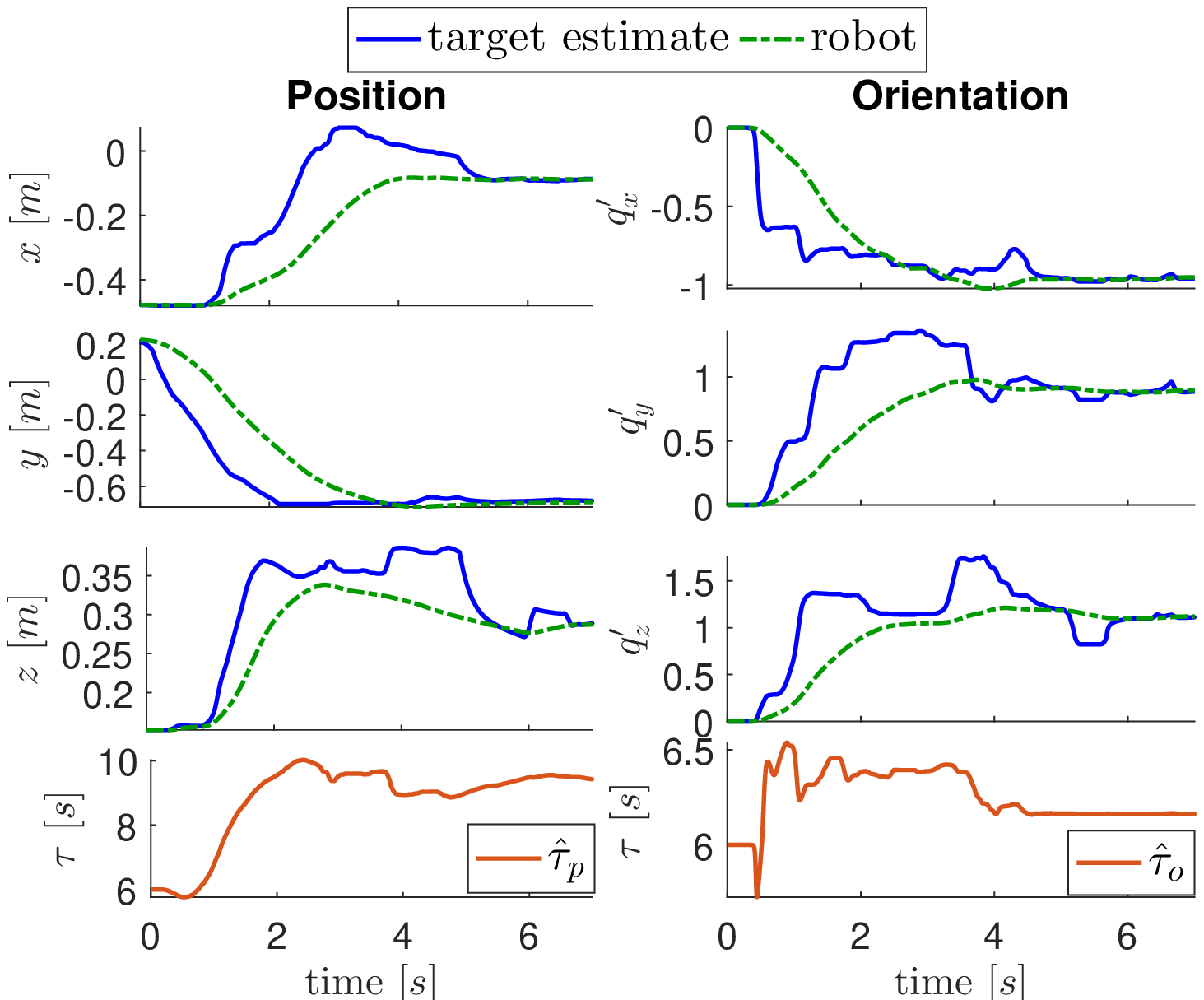}
        \caption{experiment 1}
    \end{subfigure}%
    \begin{subfigure}[b]{0.33\textwidth}
        \centering
        \includegraphics[scale=0.38]{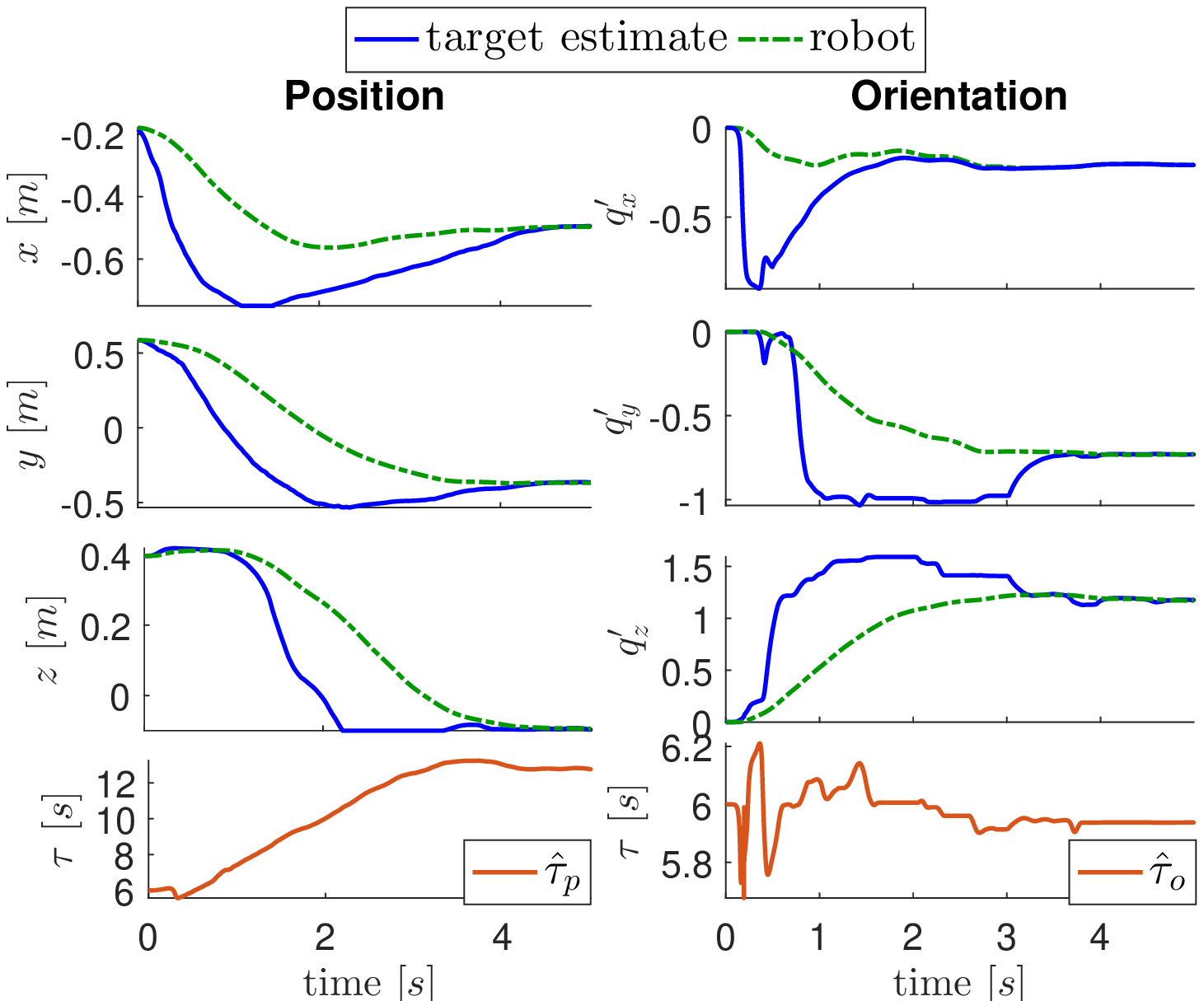}
        \caption{experiment 2}
    \end{subfigure}
    \begin{subfigure}[b]{0.33\textwidth}
        \centering
        \includegraphics[scale=0.38]{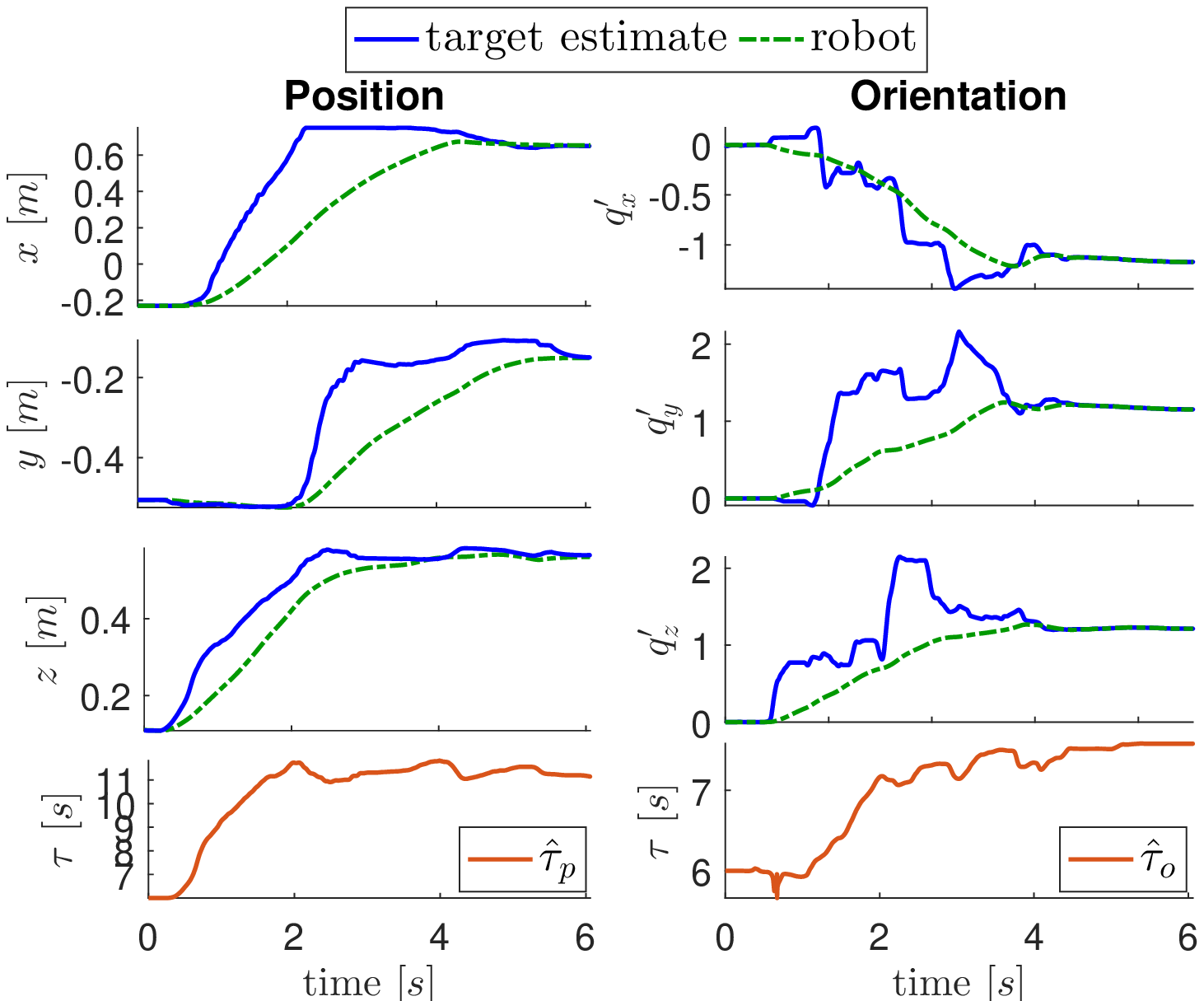}
        \caption{experiment 3}
    \end{subfigure}%
    \caption{Estimation results for the three first object transfer experiments. The estimates of the target position and orientation are plotted with solid blue lines and the corresponding robot's position/orientation with dash-dotted green line. The time scaling estimates are plotted with light brown lines. For orientation all quantities are expressed as the quaternion logarithm w.r.t the initial orientation $\vect{Q}_0$}
    \label{fig:exp3_est}
\end{figure*}

\begin{figure*}[!htbp]
    \centering
    \begin{subfigure}[b]{0.33\textwidth}
        \centering
        \includegraphics[scale=0.4]{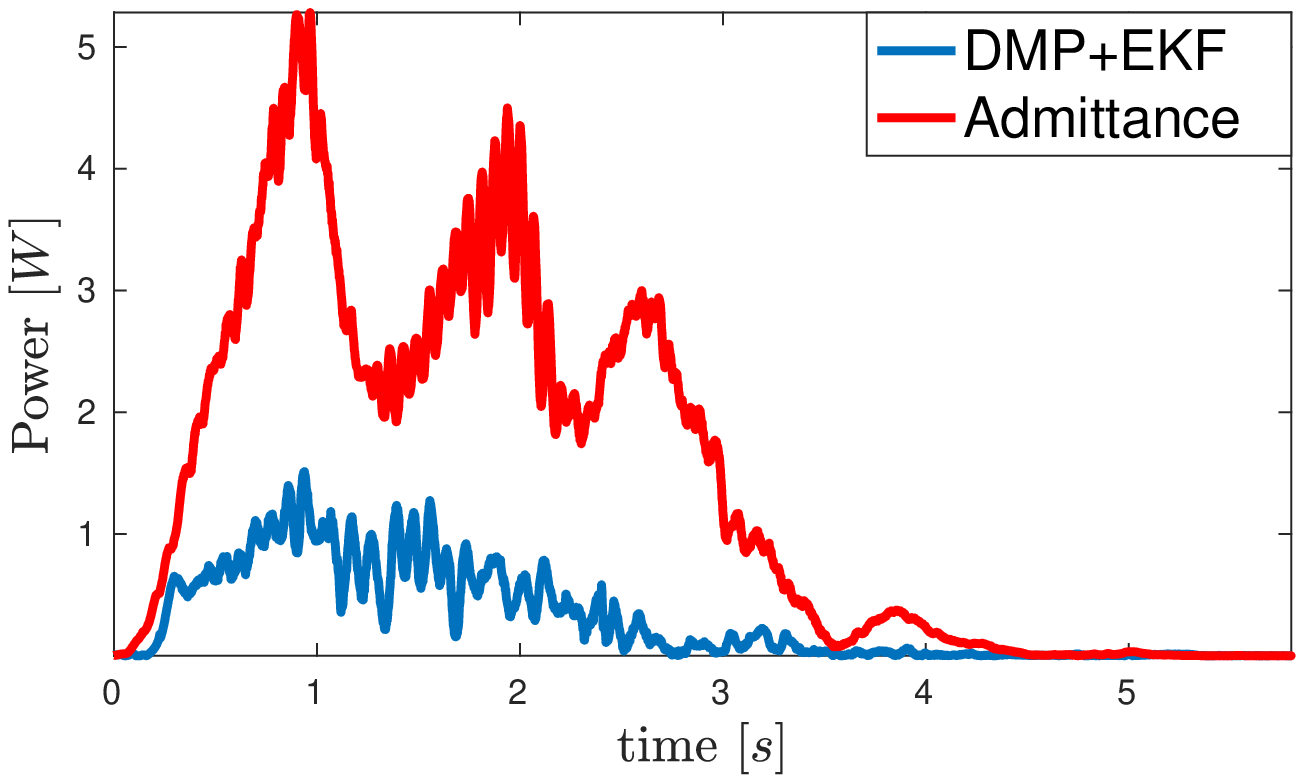}
        \caption{experiment 1}
    \end{subfigure}%
    \begin{subfigure}[b]{0.33\textwidth}
        \centering
        \includegraphics[scale=0.4]{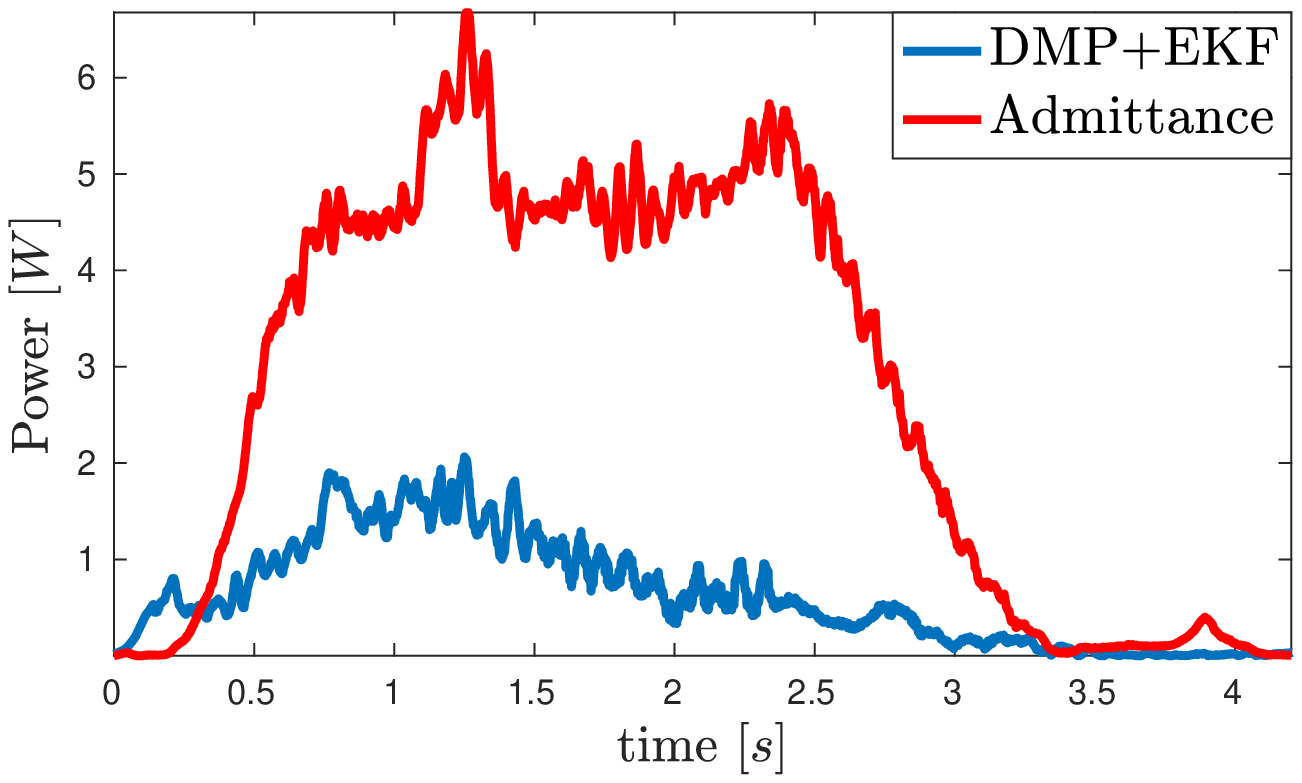}
        \caption{experiment 2}
    \end{subfigure}
    \begin{subfigure}[b]{0.33\textwidth}
        \centering
        \includegraphics[scale=0.4]{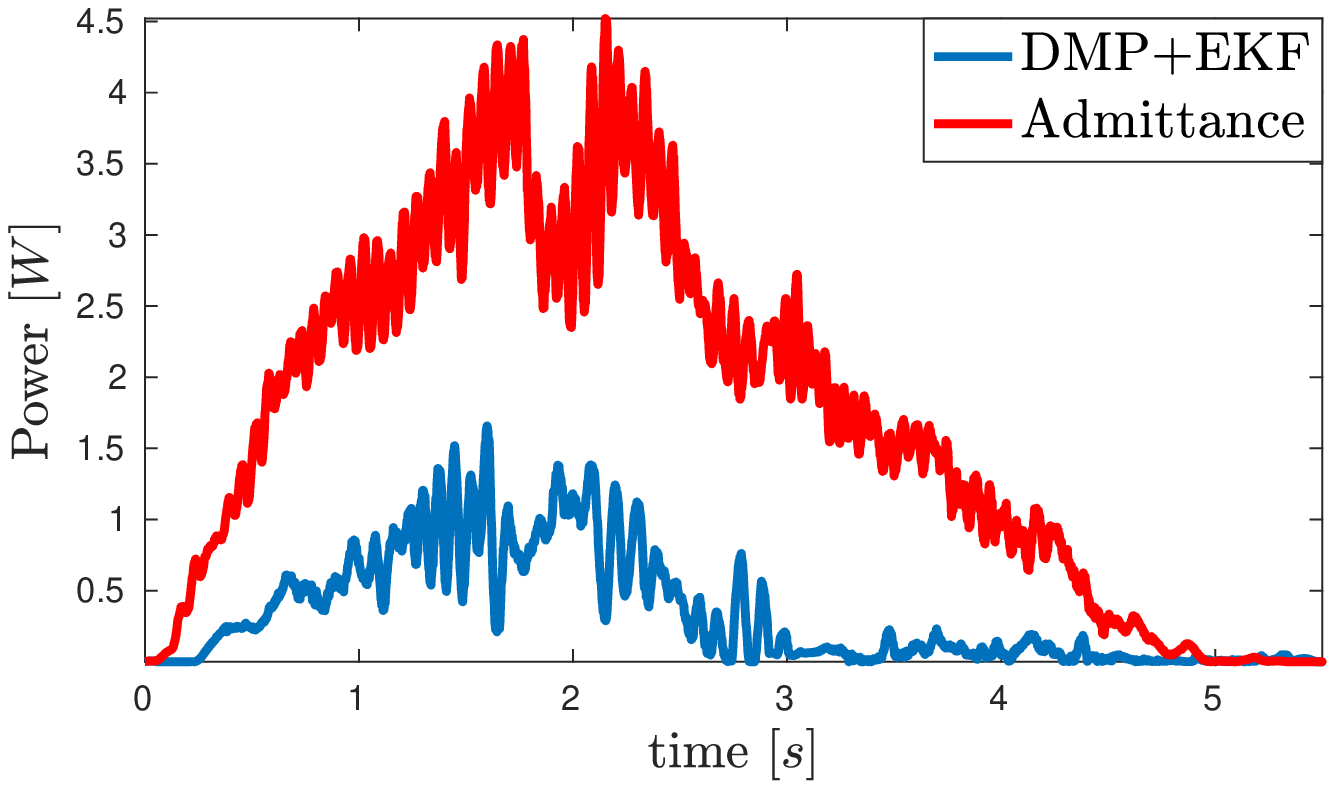}
        \caption{experiment 3}
    \end{subfigure}%
    \caption{Absolute power for the three first object transfer experiments. The blue graphs correspond to DMP+EKF and the red graphs to admittance.}
    \label{fig:exp3_power}
\end{figure*}

\begin{figure}[!ht]
	\centering
	\includegraphics[width=0.4\textwidth,height=0.52\textwidth]{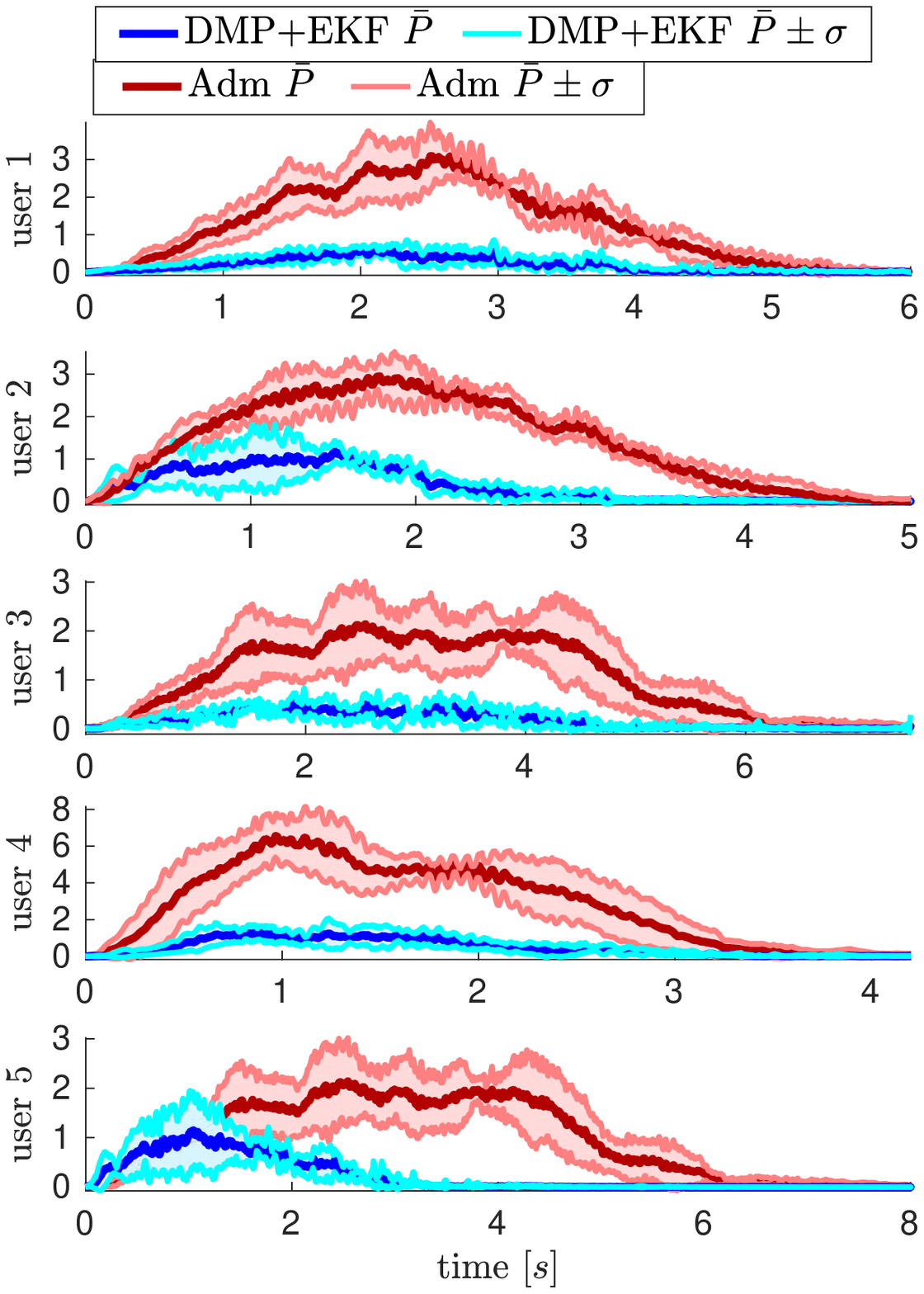}
	\caption{Power $\pm$ the standard deviation over five object transfer repetitions for each user. 
	The blue graphs correspond to using DMP+EKF and the red graphs to Admittance.}
	\label{fig:multi_power}
\end{figure}

\section{Conclusions} \label{sec:Conclusions}
In this work a DMP-based reference model and an EKF observer for predicting the target pose and time scaling for assisting the human proactively in the transportation of an object was proposed. The stability analysis that was carried out proves that the proposed scheme guarantees the boundedness of the reference model and observer. Experimental results validate the proposed approach, highlighting its practical benefits and efficiency with respect to human effort minimization. In this work the object's weight was assumed known a priori and only human exerted forces were considered. 
Our future work is oriented towards the extension of the proposed method to handle unknown object dynamics and the discrimination of human wrenches from the wrenches emerging due to contact with the environment.

\section*{Appendix A - Unit Quaternion Preliminaries}

Given a rotation matrix  $\vect{R}\in SO(3)$, an orientation can be expressed in terms of the unit quaternion $\vect{Q} \in \mathbb{S}^{3}$ as $\vect{Q}=[w \ \vect{v}^T]^T = [\cos(\theta) \ \sin(\theta) \vect{k}^T]$,
where $\vect{k} \in \mathbb{R}^3$, $2\theta \in [0 \ 2\pi)$ are the equivalent unit axis - angle representation.
The quaternion product between the unit quaternions $\vect{Q}_1$, $\vect{Q}_2$ is 
$\vect{Q}_1*\vect{Q}_2 =
    \begin{bmatrix}
    w_1 w_2 - \vect{v}_1^T\vect{v}_2 \\
    w_1\vect{v}_2 + w_2\vect{v}_1 + \vect{v}_1 \times \vect{v}_2
\end{bmatrix}$.
The inverse of a unit quaternion is equal to its conjugate which is
$\vect{Q}^{-1} = \bar{\vect{Q}} = [w \ -\vect{v}^T]^T$.
The quaternion logarithm is $\vect{q} = \log(\vect{Q})$, where $\log: \ \mathbb{S}^3 \rightarrow \mathbb{R}^3$ is defined as:
\begin{equation} \label{eq:quatLog}
    \log(\vect{Q}) \triangleq 
        \left\{
            \begin{matrix}
                2\cos^{-1}(w)\frac{\vect{v}}{||\vect{v}||}, \ |w|\ne1 \\
                [0,0,0]^T, \ \text{otherwise}
            \end{matrix}
        \right.
\end{equation}
The quaternion exponential is $\vect{Q} = \exp(\vect{q})$, where $\exp: \ \mathbb{R}^3 \rightarrow \mathbb{S}^3$ is defined as:
\begin{equation} \label{eq:quatExp}
    \exp(\vect{q}) \triangleq 
        \left\{
            \begin{matrix}
                [\cos(||\vect{q}/2||), \sin(||\vect{q}/2||)\frac{\vect{q}^T}{||\vect{q}||}]^T, \ ||\vect{q}|| \ne 0 \\
                [1,0,0,0]^T, \ \text{otherwise}
            \end{matrix}
        \right.
\end{equation}
If we limit the domain of the exponential map $\exp: \ \mathbb{R}^3 \rightarrow \mathbb{S}^3$ to $||\vect{v}|| < \pi$ and the domain of the logarithmic map to $\mathbb{S}^3/([-1, 0, 0, 0]^T)$, then both mappings become one-to-one, continuously differentiable and inverse to each other. 
The equations that relate the time derivative of $\vect{q}$ and the rotational velocity $\vect{\omega}$ and acceleration $\dvect{\omega}$ of $\vect{Q}$ are \cite{DMP_orient_Koutras}:
\begin{equation} \label{eq:dq_omega}
    \dvect{q} = \vect{J}_Q (\vect{\Omega} * \vect{Q})
\end{equation}
\begin{equation} \label{eq:omega_dq}
    \vect{\Omega} = 2 (\vect{J}_q\dvect{q})*\bvect{Q}
\end{equation}
\begin{equation} \label{eq:dvRot_ddq}
    \dvect{\Omega} = 2 (\dvect{J}_q\dvect{q} + \vect{J}_q\ddvect{q})*\bvect{Q} - \frac{1}{2}\begin{bmatrix} ||\vect{\omega}||^2 \\ \vect{0}_{3 \times 1}  \end{bmatrix}
\end{equation}
where $\vect{\Omega} = [0 \ \vect{\omega}^T]^T$ and
\begin{equation} \label{eq:J_q}
    \vect{J}_Q = 2
    \begin{bmatrix}
        \frac{\theta \cos(\theta) - \sin(\theta)}{\sin^2(\theta)}\vect{k} & \frac{\theta}{\sin(\theta)}\vect{I}_3
    \end{bmatrix}
\end{equation}
\begin{equation} \label{eq:J_Q}
    \vect{J}_q = \frac{1}{2}
    \begin{bmatrix}
        -\sin(\theta) \vect{k}^T \\
        \frac{\sin(\theta)}{\theta}(\vect{I}_3-\vect{k}\vect{k}^T) + \cos(\theta)\vect{k}\vect{k}^T
    \end{bmatrix}
\end{equation}


\section*{Appendix B - DMP Preliminaries}
\subsection{Cartesian position encoding}

A DMP for encoding a Cartesian position point-to-point motion can be expressed as \cite{Ijspeert2013}:
\begin{align} 
     &\ddvect{p} = \frac{1}{\tau^2}(\alpha_z \beta_z(\vect{p}_{g} - \vect{p}) - \alpha_z \tau \dvect{p} + g_f(x)\vect{K}_{p_g}\vect{f}(x)) \label{eq:DMP_ddp} \\
    &\tau \dot{x} = 1 \ , \ x(0) = 0 \label{eq:DMP_dx}
\end{align}
The desired motion is encoded by the forcing term, ${\vect{f}}(x) = \frac{\sum_{i=1}^{N} \vect{W}_i \psi_i(x)}{\sum_{i=1}^{N} \psi_i(x)}$ which is the weighted sum of $N$ Gaussian kernels,
with $\psi_i(x) = \exp(-h_i(x-c_i)^2)$. The matrix $\vect{W} \in \mathbf{R}^{3 \times N}$, with $\vect{W}_i$ denoting the $i_{th}$ column, contains in each row the weights for each Cartesian coordinate, which can be learned using Least Squares or Locally Weighted Regression (LWR) \cite{Schaal1998} based on the demonstrated data.
The DMP evolves based on the phase variable $x$, used to avoid direct time dependency. We use a linear canonical system \eqref{eq:DMP_dx} 
as in \cite{Antosidi_HRCOT_2019}.
The variable $\tau>0$ provides temporal scaling of the encoded motion pattern and the matrix $\vect{K}_{p_g} = diag(\vect{p}_g - \vect{p}_0)$ provides spatial scaling.
The sigmoid gating $g_f(x)= \frac{1}{1 + e^{a_g(x-c_g)}}$ ensures that the forcing term fades to zero at $x=1$ as in \cite{DMP_sigmoid_gating}, thus for $x \ge 1$ \eqref{eq:DMP_ddp}
acts as a pure spring-damper and converges asymptotically to the goal $\vect{p}_g$. 
Integrating \eqref{eq:DMP_dx} and substituting $x$ in \eqref{eq:DMP_ddp} we get \eqref{eq:DMP_ddpos}.




\subsection{Cartesian orientation encoding}
A formulation for orientation DMP that avoids undesired oscillations during reproduction as opposed to \cite{Gams_Ude_2014}  is proposed in \cite{DMP_orient_Koutras} and can be written as follows :
\begin{align}
     &\ddvect{q}' = \frac{1}{\tau^2}(\alpha_z \beta_z(\vect{q}'_{g} - \vect{q}') - \alpha_z \tau \dvect{q}' + g_f(x)\vect{K}_{q_g}\vect{f}(x)) \label{eq:DMP_ddq} \\
    &\tau \dot{x} = 1 \ , \ x(0) = 0 \label{eq:DMP_dx_o}
\end{align}
where, in \cite{DMP_orient_Koutras}, $\vect{q}' = \log(\vect{Q}_g*\bvect{Q})$ with $\vect{Q}$, $\vect{Q}_g$ the current and target orientation in unit quaternions respectively and $\vect{K}_{q_g}=diag(\log(\vect{Q}_g*\bvect{Q}_0))$. 
However, $\vect{q}'$ couples the DMP's state to $\vect{Q}_g$, which in our case is estimated online. Adopting this would incur extra dynamics to the DMP, affecting the execution and making the derivation of the proof in section \ref{sec:Stability_analysis} not possible. Instead, we retain the formulation given by \eqref{eq:DMP_ddq}, which exploits the use of the logarithmic map and define $\vect{q}' = \log(\vect{Q}*\bvect{Q}_0)$, where $\vect{Q}_0$ is the initial orientation. This is also in line with the position DMP, which is also anchored to the initial position, i.e. \eqref{eq:DMP_ddp} can be also written as $\ddvect{p}' = \frac{1}{\tau^2}(\alpha_z \beta_z(\vect{p}'_{g} - \vect{p}') - \alpha_z \tau \dvect{p}' + g_f(x)\vect{K}_{p_g}\vect{f}(x))$, where $\vect{p}' = \vect{p} - \vect{p}_0$ and $\vect{p}'_g = \vect{p}_g - \vect{p}_0$.
Integrating \eqref{eq:DMP_dx_o} and substituting $x$ in \eqref{eq:DMP_ddq} we get \eqref{eq:DMP_ddq_1}.

\section*{Appendix C - Proof of Theorem 1}

In the following we complete the proof of Theorem 1 from Section \ref{sec:Stability_analysis}.

\begin{proof}

The system given by \eqref{eq:model_ref_rewritten}, with $\vect{d}$ being a time varying bounded disturbance, can be decoupled in each dimension. Therefore we will conduct the analysis for one element of $\vect{q}'$ with dynamics:
\begin{equation} \label{eq:model_ref_rewritten_1d}
    \hat{\tau}_o^2\ddot{y} = 
    -\alpha_z \beta_z y 
    - \alpha_z \hat{\tau}_o \dot{y}
    + \hat{\tau}_o^2 d
\end{equation}
Introducing the state variable $\vect{\zeta} = [y \ \dot{y}]^T$ \eqref{eq:model_ref_rewritten_1d} can be written in matrix form:
    $
    \dot{\vect{\zeta}} = 
    \vect{A}(t) \vect{\zeta} 
    + \vect{B} d 
    $
where $\vect{A}(t) = \frac{1}{\hat{\tau}_o^2} \begin{bmatrix} 0 & 1 \\ -\alpha_z \beta_z & -\alpha_z \hat{\tau}_o\end{bmatrix}$ and $\vect{B} = \begin{bmatrix} 0 \\ 1 \end{bmatrix}$. 
Since $\alpha_z, \beta_z > 0$ and $\hat{\tau}_o$ is positive and bounded, $Re\{\lambda_i(\vect{A}(t))\} \le -\sigma_s$ $\forall t \ge 0 $, $i=1,2$, where $\sigma_s>0$ is constant. Taking into account that $\bar{\vect{C}}(t), \vect{P}(t) \in \vect{L}_{\infty}$ and $\vect{\tau}_{ext} \in \vect{L}_{\infty} \cap \vect{L}_2$ it follows from \eqref{eq:dtheta_hat} that $\dot{\hat{\tau}}_o \in \vect{L}_{\infty} \cap \vect{L}_2$. 
Moreover, $\dot{\vect{A}}(t) = \vect{A}_1(t) \dot{\hat{\tau}}_o$, where 
$\vect{A}_1(t) = \begin{bmatrix} 0 & -2/\hat{\tau}_o^3 \\ 2\alpha_z\beta_z/\hat{\tau}_o^3 & \alpha_z/\hat{\tau}_o^2 \end{bmatrix}$ and since $||\vect{A}_1(t)|| \in  \vect{L}_{\infty}$ and $\dot{\hat{\tau}}_o \in \vect{L}_{\infty} \cap \vect{L}_2$ we will also have that $||\dot{\vect{A}}(t)|| \le ||\vect{A}_1(t)|| |\dot{\hat{\tau}}_o| \in \vect{L}_{\infty} \cap \vect{L}_2 $.
 Finally, since $\vect{A}(t)$ is differentiable and bounded, based on Theorem $3.4.11$ from \cite{Robust_adapt_control_Ioannou}, the origin is uniformly globally asymptotically stable equilibrium for the system $\dot{\vect{\zeta}} = \vect{A}(t) \vect{\zeta}$. Therefore there exist matrices $\vect{\Pi}(t) = \vect{\Pi}^T(t) > 0$ and $\vect{Q}(t) = \vect{Q}^T(t) > 0$ with $\dot{\vect{\Pi}} = -\vect{A}^T(t)\vect{\Pi}(t) - \vect{\Pi}(t)\vect{A}(t) - \vect{Q}(t)$ satisfying $0 < \pi_1 < ||\vect{\Pi}(t)|| < \pi_2$ and $0 < q_1 < ||\vect{Q}(t)|| < q_2$ \cite{Marquez2003NonlinearCS}. Consequently the system 
 $
\dot{\vect{\zeta}} = 
\vect{A}(t) \vect{\zeta} 
+ \vect{B} d 
$
is uniformly ultimately bounded, which can be shown easily using the Lyapunov function $V = \vect{\zeta}^T \vect{\Pi}(t) \vect{\zeta}$. Hence, $y, \dot{y} \in \vect{L}_{\infty}$. The same analysis holds for each dimension, so $\dvect{q} \in \vect{L}_{\infty}$, therefore from \eqref{eq:omega_dq} we conclude that $\vect{\omega} \in \vect{L}_{\infty}$.
\end{proof}

\bibliographystyle{IEEEtran}
\bibliography{main} 

\end{document}